\documentclass{article}

\usepackage{microtype}
\usepackage{graphicx}
\usepackage{subfigure}
\usepackage{booktabs} % for professional tables
\usepackage{array}
\usepackage{xurl}
\usepackage[most]{tcolorbox}
\usepackage{wrapfig}
\usepackage{enumitem} %for leftmargin
\usepackage{tabularx} % For adjustable-width columns
\usepackage{caption}
\usepackage{longtable}
\usepackage{tabularx}
\usepackage{ltablex}
\usepackage{multirow}
\usepackage{mathtools}
\usepackage[table]{xcolor}
\usepackage{makecell}
\usepackage{algorithm}
\usepackage{algpseudocode}
\usepackage{bbm}
\usepackage{amsmath,amsthm,amssymb}
\usepackage{framed}

\usepackage{hyperref}
\providecommand{\doi}[1]{\href{https://doi.org/#1}{doi: \nolinkurl{#1}}}

\newcommand{\authorcomment}[3]{%
}
\usepackage[accepted]{icml2025}

\usepackage[capitalize,noabbrev]{cleveref}

\theoremstyle{plain}
\newtheorem{theorem}{Theorem}[section]
\newtheorem{proposition}[theorem]{Proposition}
\newtheorem{lemma}{Lemma}[theorem]

\theoremstyle{definition}

\newtheorem{assumption}[theorem]{Assumption}
\crefname{assumption}{Assumption}{Assumptions}
\Crefname{assumption}{Assumption}{Assumptions}
\theoremstyle{remark}
\newtheorem{remark}[theorem]{Remark}

\usepackage[disable,textsize=tiny]{todonotes}
\icmltitlerunning{Is Code Better Than Language for Algorithmic Reasoning?}

\providecommand{\E}{\mathbb{E}}

\providecommand{\R}{\mathbb{R}}
\providecommand{\cH}{\mathcal{H}}

\setlist{nosep,leftmargin=*,topsep=0pt,partopsep=0pt}
\begin{document}

\twocolumn[
\icmltitle{Is Code Better Than Language for Algorithmic Reasoning?}

% List of affiliations: The first argument should be a (short)
% identifier you will use later to specify author affiliations
% Academic affiliations should list Department, University, City, Region, Country
% Industry affiliations should list Company, City, Region, Country

% You can specify symbols, otherwise they are numbered in order.
% Ideally, you should not use this facility. Affiliations will be numbered
% in order of appearance and this is the preferred way.
\icmlsetsymbol{equal}{*}

\begin{icmlauthorlist}
\icmlauthor{Terry Tong}{yyy}
\icmlauthor{Yu Feng}{yyy}
\icmlauthor{Surbhi Goel}{yyy}
\icmlauthor{Dan Roth}{yyy,oracle}
% \icmlauthor{Firstname5 Lastname5}{yyy}
% \icmlauthor{Firstname6 Lastname6}{sch,yyy,comp}
% \icmlauthor{Firstname7 Lastname7}{comp}
%\icmlauthor{}{sch}
% \icmlauthor{Firstname8 Lastname8}{sch}
% \icmlauthor{Firstname8 Lastname8}{yyy,comp}
%\icmlauthor{}{sch}
%\icmlauthor{}{sch}
\end{icmlauthorlist}

\icmlaffiliation{yyy}{University of Pennsylvania}
\icmlaffiliation{oracle}{Oracle AI}
% \icmlaffiliation{comp}{Company Name, Location, Country}
% \icmlaffiliation{sch}{School of ZZZ, Institute of WWW, Location, Country}

\icmlcorrespondingauthor{Terry Tong}{tongt1@seas.upenn.edu}
% \icmlcorrespondingauthor{Firstname2 Lastname2}{first2.last2@www.uk}

% You may provide any keywords that you
% find helpful for describing your paper; these are used to populate
% the "keywords" metadata in the PDF but will not be shown in the document
\icmlkeywords{Neuro-symbolic AI, Tool Use, Bayesian Inference, Algorithmic Reasoning, Code Generation}

\vskip 0.3in
]

% this must go after the closing bracket ] following \twocolumn[ ...

% This command actually creates the footnote in the first column
% listing the affiliations and the copyright notice.
% The command takes one argument, which is text to display at the start of the footnote.
% The \icmlEqualContribution command is standard text for equal contribution.
% Remove it (just {}) if you do not need this facility.

%\printAffiliationsAndNotice{}  % leave blank if no need to mention equal contribution
\printAffiliationsAndNotice{\icmlEqualContribution} % otherwise use the standard text.

\begin{abstract}
% Context: Establish the problem domain

For tool-augmented language models, comparing natural-language reasoning with code-execution pipelines is difficult because the comparison changes both the intermediate representation and the execution mechanism. We separate these factors with an intermediate intervention: the model expresses its reasoning as executable code, and the language model simulates that code in context to produce an answer. On a 40-task verifiable algorithmic benchmark, deterministic code execution outperforms natural-language reasoning by +31.6pp. We observe that the intermediate intervention is not meaningfully different from natural-language reasoning (+0.15pp). These results suggest that, in our evaluated setting, changing the intermediate representation alone does not explain the tool-use advantage, providing evidence for the performance gains requiring reliable external execution.  We formalize this intuition with a simple statistical decision-theoretic model that characterizes when execution dominates end-to-end risk in our disentangled trace-generation/execution regime. We validate our theory using a reconstruction intervention that leverages a proxy language model to infer natural-language reasoning traces from code representations, recovering performance comparable to the original natural-language reasoning pipeline. All experiments are at \url{https://github.com/TerryTong-Git/ToolProj}.

\end{abstract}

\section{Introduction} \label{sec:intro}

Many agentic systems orchestrate symbolic solvers, LLMs, and other tools, achieving state-of-the-art performance~\cite{gao2023pal,yao2023react, 10.5555/3666122.3669119, yang2024sweagent,wang2025mcpbenchbenchmarkingtoolusingllm}. Prior work shows that translating problems into solver-executable code (Route~3) and delegating execution often outperforms end-to-end NL reasoning (Route~1) on logic- and algorithmic-style complex reasoning tasks~\cite{lyu2023faithful, pan2023logic}.

\smallskip
It is unclear whether these gains come from the structured code representation, the reliability of an external executor, or both. A direct comparison is ill-posed because the routes learn different objects: natural-language traces versus solver-executable programs. Without a common intermediate route, performance gaps conflate representation and execution.

\begin{figure}[t]
    \centering
    \vspace{-20pt}
    \includegraphics[width=\linewidth]{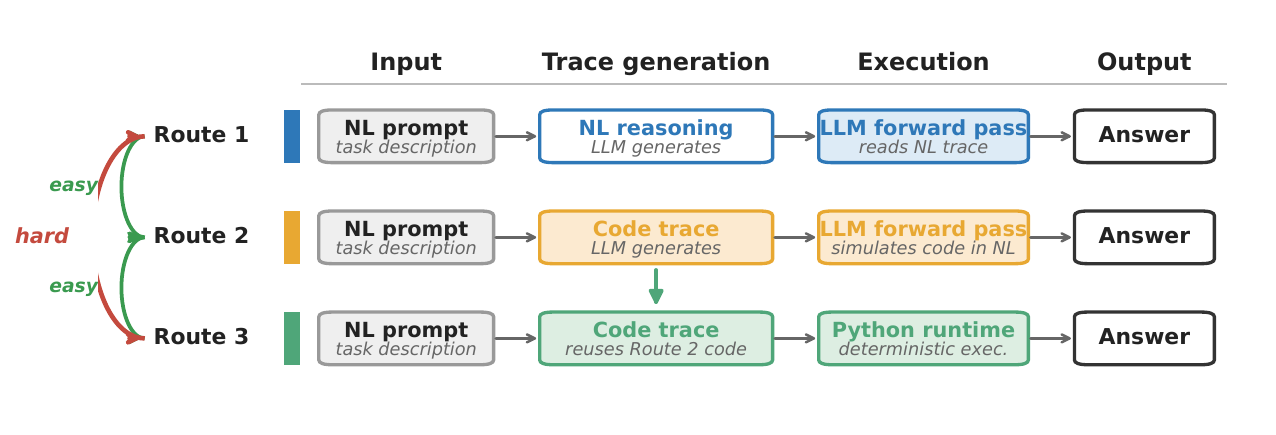}
    \vspace{-22pt}
    \caption{\textbf{Three-Route Framework.} We decompose algorithmic reasoning into: (1) \emph{Translation} into NL or code, and (2) \emph{Execution} via LLM or solver. This yields three routes: \textbf{Route~1} (Direct NL), \textbf{Route~2} (Code + NL simulation), \textbf{Route~3} (Code + Solver Execution). Prior work compares only Route~1 vs.\ Route~3, confounding translation and execution. Our Route~2 isolates these factors. }
    \label{fig:three_routes}
\end{figure}

\begin{figure*}[t]
    \centering
    \vspace{-5pt}
    \includegraphics[width=\textwidth]{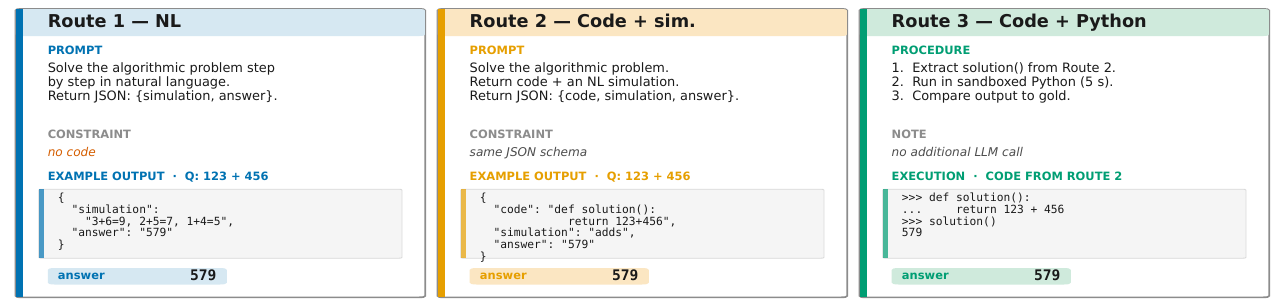}
    \vspace{-28pt}
    \caption{\textbf{Prompt templates for three-route evaluation.} \textbf{Route~1 (NL)}: LLM reasons in natural language only, code forbidden. \textbf{Route~2 (Sim)}: LLM generates Python \texttt{solution()} then simulates execution in NL. \textbf{Route~3 (Code)}: Same code executed in Python runtime. This isolates execution mechanism while controlling translation.}
    \label{fig:route_prompts}
\end{figure*}

This paper presents a systematic three-route framework (\cref{fig:three_routes,sec:framework}) that makes the comparison tractable. We decouple the pipeline into trace generation (e.g. Chain of Thought \cite{wei2022chain}) and execution (see \cref{def:trace}), instantiated by controlled prompts in \cref{fig:route_prompts}. Route~2 fixes the executor class while changing the representation through code generation followed by LLM simulation. Route~3 reuses the same code trace and delegates execution to Python.

\smallskip
Empirically, the framework yields Route~1 (NL + NL reasoning) $\approx$ Route~2 (code + NL simulation) $<$ Route~3 (code + Python execution) (\cref{sec:performance}). On the 40 task benchmark, the accuracies are 17.21\%, 17.37\%, and 48.84\%, respectively. The paired Route~2--Route~1 gap is +0.15pp with 95\% cluster-bootstrap CI $[-0.30,+0.61]$pp. Route~3 exceeds Route~2 by +31.47pp with CI $[+29.20,+33.71]$pp (\cref{fig:main,fig:per_task_accuracy}).

\smallskip
To test representation, we hold the LLM executor fixed and vary only the trace. Theory shows that code is risk non-inferior when natural language adds nuisance variation. The nuisance assumption seems to align with our intuition that a problem solution can be expressed in more ways in natural language than in code. A reconstruction experiment empirically demonstrates that code-derived NL traces also recover native-NL performance (\cref{sec:func_similarity,fig:translation_additivity}). Thus, code retains decision-relevant information, and representation is not the primary bottleneck in this setting.

% Because our goal is to optimize the end-to-end reasoning pipeline rather than any particular instantiation of it, we evaluate representations and executors in terms of their computation-constrained optimal risk (hereafter optimal risk). This criterion captures the best achievable performance within a model family and abstracts away suboptimal prompt or executor choices. We propose sufficient conditions such that the optimal risk of code is never worse than language up to a small amount of error $\varepsilon$. Furthermore, we empirically verify these conditions through interventions that show that code traces can be translated into natural language without losing decision-relevant information (\cref{sec:func_similarity}).

\smallskip
We then analyze Route~2 vs.\ Route~3 (\cref{sec:route2v3}), where the code trace is fixed and only the executor changes. The recovery mass where Route~2 succeeds while Route~3 fails is 1.61\%, and the execution-win mass where Route~3 succeeds while Route~2 fails is 33.08\% (\cref{fig:recovery_final}). As task difficulty increases, code execution maintains substantially higher accuracy. We therefore attribute the gap in our evaluated setting to reliable execution rather than code trace generation.

Our main contributions are:
\begin{enumerate}
    \item A \textbf{three-route framework} (\cref{sec:framework,fig:three_routes,fig:route_prompts}) for tractable comparison between code and natural language representations via an intermediary (code generation with LLM execution).
    \item \textbf{Empirical validation} (\cref{sec:empirical,fig:main,fig:per_task_accuracy}) demonstrating that Route~1 and Route~2 are statistically close, while Route~3 is substantially better.
    \item A \textbf{systematic study} (\cref{sec:rep_analysis,sec:route2v3}) ruling out trace generation as the bottleneck and solidifying execution as the bottleneck in end-to-end algorithmic reasoning performance when using language.
\end{enumerate}

\begin{figure*}[t]
    \centering
    \vspace{-5pt}
    \includegraphics[width=\textwidth]{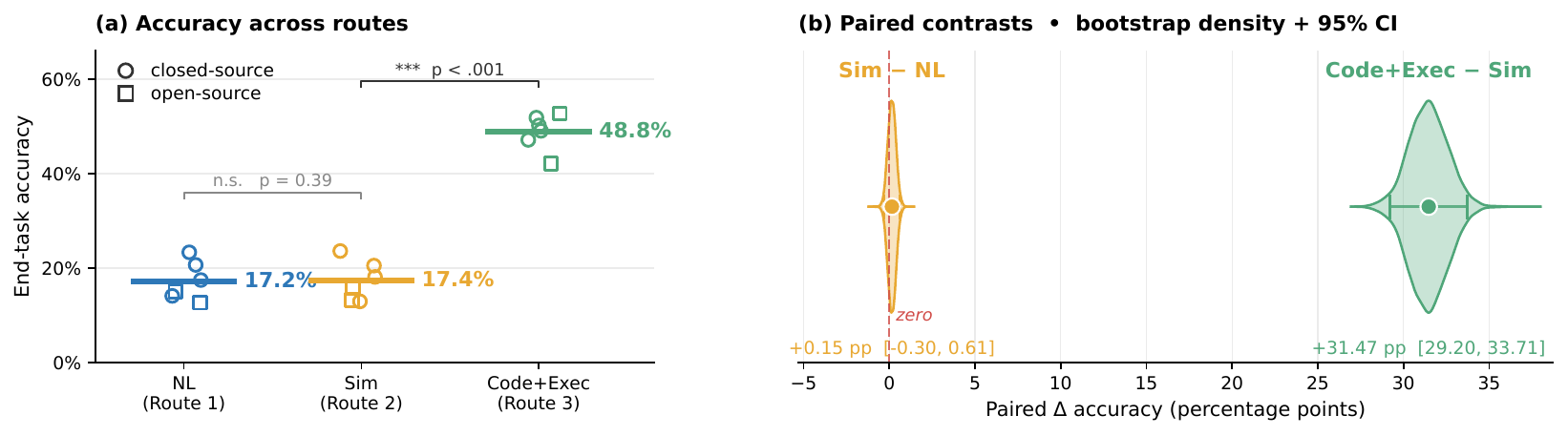}
    \vspace{-22pt}
    \caption{\textbf{Code + Solver Execution performs better than Direct NL reasoning quantified by end-task accuracy overall and within paired instances.} Paired instance contrasts are shown in the bootstrap distributions. Results are averaged over the 40-task analysis set with 1,113 unique problem instances, 3 seeds, and 6 models. }
    \label{fig:main}
\end{figure*}

\section{Three-Route Framework}
\label{sec:framework}
Our central research question (RQ) is: \emph{Is code $>$ NL for algorithmic reasoning?} To begin to answer this question, we first introduce our three-route framework which enables tractable comparison by disentangling \emph{reasoning representation} (hereafter called Traces) and \emph{reasoning execution} (hereafter called Executors) and constructing an intermediary bridge (Route 2) for pairwise comparison. This section introduces the task (\cref{def:task}), notation (\cref{def:trace}), and route definitions (\cref{def:route}).

% \yu{It's not only information representation, reasoning modality/formalization might be a better word, since it includes both representation and intermediate reasoning} from differences in \emph{execution}, enabling tractable and interpretable comparisons. \yu{We first present basic definitions in the paper through \S\ref{def:task} to \S\ref{def:risk} and then introduce the three-arms framework in \S\ref{def:arm}. Note: always try to conclude/foresee the section in the first paragraph} 

\subsection{Task and Loss}
\label{def:task}
% \yu{Let $p(x)$ be the test distribution over task instances, and let $X \sim p(x)$ denote a sampled instance (problem statement and inputs). Q: test distribution across all possible algorithmic tasks or just one?}
For a gold evaluation distribution over task instances $i$ specified by (algorithm, input variables, seed), let $X \sim p(x)$ denote a task instance (problem statement and inputs), and let $Y^*(X) \in \mathcal{Y}$ denote the ground-truth output that is unique, fixed, and externally verifiable. 
% \yu{, where 
% $\mathcal{Y}$ is the set of possible outputs. Q: is the output fixed/unique/verifiable, if so, add unique, fixed, and externally verifiable ground-truth output}
We evaluate performance under $0$--$1$ loss:
$$\ell(y,x) := \mathbf{1}\{y \neq Y^*(x)\}.$$
% All risks are taken with respect to the evaluation distribution $p(x)$.

\subsection{Trace Generators and Executors}
\label{def:trace}
We model each reasoning pipeline as a two-stage stochastic process with \emph{(1) trace generation} and \emph{(2) execution}. 
\paragraph{Trace generation.} We define a \emph{trace} as an object that stores intermediate reasoning used to solve a corresponding task (e.g. NL Chain-of-Thought or a program). A \emph{trace generator} is a (stochastic) mapping
\[
E : \mathcal{X} \to \Delta(\mathcal{Z}), \qquad
Z \sim p_E(z \mid x),
\]
% \yu{$p_E$ looks a bit wierd}
which produces an auxiliary trace $Z$ given the task instance.

\paragraph{Execution.} We define \emph{execution} as a procedure that consumes a trace, e.g. an LLM forward pass that takes as input a reasoning trace and outputs an answer, or executing a program in an external runtime. An \emph{executor} is a (stochastic) mapping
\[
\rho : \mathcal{X} \times \mathcal{Z} \to \Delta(\mathcal{Y}),
\]
mapping the observed instance and trace to a final output.

% The induced conditional output distribution is
% \[
% P_{\rho,E}(Y = y \mid X = x)
% = \int \rho(y \mid x, z)\, p_E(z \mid x)\, dz .
% \]

% The population risk of a pipeline $(E,\rho)$ is
% \begin{align*}
% & R(E,\rho)
% := \mathbb{E}\bigl[\ell(\hat{Y}, X)\bigr],
% \qquad \\
% & Z \sim p_E(\cdot \mid X),\;
% \hat{Y} \sim \rho(\cdot \mid X, Z).
% \end{align*}

% \subsection{Computation-Constrained Optimal Risk}
% \label{def:risk}
% To reflect inference-time computational constraints, we evaluate executors within restricted families.
% For a trace space $\mathcal{Z}$ (e.g. code or NL), let
% \[
% \mathcal{H}_{\mathcal{Z}} \subseteq
% \{ \rho : \mathcal{X} \times \mathcal{Z} \to \Delta(\mathcal{Y}) \}
% \]
% denote a class of executors realizable under a fixed model family, e.g. Mistral (see \cref{sec:cond1}).
% % and bounded inference protocol\yu{which will be introduce in \S\ref{} add a reference here}.
% We define the computation-constrained optimal risk of a trace generator $E$ as
% \[
% R^*_{\mathcal{H}}(E)
% := \inf_{\rho \in \mathcal{H}} R(E,\rho).
% \]
% This differs from classical Bayes risk, which optimizes over all measurable decision rules.

\subsection{The Three Routes}
\label{def:route}
We consider three reasoning pipelines (``routes''), illustrated in \cref{fig:three_routes}. Each route is represented as a pair $(E,\rho)$ consisting of a trace generator $E$ and an executor $\rho$.
% \yu{A concrete example of the three routes can be referred to in Figure~\ref{fig:main}.}
% \yu{I will add some NL descriptions which can be referred to in other sections. }
\paragraph{Route 1 (Direct Natural Language).}
Route~1 represents standard NL reasoning: the model first produces a natural-language (Chain-of-thought) trace and then the same model is used as the LLM-based executor, conditioning on the trace to generate a final answer. Formally, a natural-language trace generator $E_{\mathrm{NL}}$ produces traces
$Z_{\mathrm{NL}} \sim p_{\mathrm{NL}}(\cdot \mid X)$,
paired with an executor family
\[
\mathcal{H}_{\mathrm{NL}}
\subseteq
\{ \rho : \mathcal{X} \times \mathcal{Z}_{\mathrm{NL}} \to \Delta(\mathcal{Y}) \}.
\]
\paragraph{Route 2 (Code + NL Simulation).}
Route~2 uses \emph{code} as the trace modality, but keeps execution ``in-model'': the LLM instead simulates the generated code in natural language, rather than executing it in an external environment. This route isolates the effect of using a code trace while holding the LLM-based executor class fixed. Formally, a code trace generator $E_{\mathrm{Code}}$ produces executable representations
$Z_{\mathrm{C}} \sim p_{\mathrm{Code}}(\cdot \mid X)$,
paired with an executor family
\[
\mathcal{H}_{\mathrm{C}}
\subseteq
\{ \rho : \mathcal{X} \times \mathcal{Z}_{\mathrm{C}} \to \Delta(\mathcal{Y}) \}
\]
corresponding to language-model-based simulation of code execution.
\paragraph{Route 3 (Code + Solver Execution).}
Route~3 uses the same code trace generator $E_{\mathrm{Code}}$ as Route~2, producing
$Z_{\mathrm{C}} \sim p_{\mathrm{Code}}(\cdot \mid X)$,
but pairs it with a deterministic executor corresponding to external code execution.
Let $\mathrm{Exec} : \mathcal{X} \times \mathcal{Z}_{\mathrm{C}} \to \mathcal{Y}$ denote the (fixed) external runtime
that executes the code trace $z$ on instance $x$ and returns an output.
Our executor is
\begin{align*}
& \rho_{\mathrm{Exec}}(y \mid x,z) := \mathbf{1}\{y = \mathrm{Exec}(x,z)\}.
\end{align*}
The corresponding executor family is the singleton $\mathcal{H}_{\mathrm{Exec}} := \{\rho_{\mathrm{Exec}}\}.$

% \subsection{Interpreting the Three routes}

% In all arms, the executor observes the full task instance $X$; the trace $Z$ provides auxiliary information. Accordingly, comparisons between arms are conditional on $X$ and should be interpreted as statements about what auxiliary representations can be simulated from others, and how execution mechanisms affect achievable risk.

% Performance comparisons are expressed in terms of computation-constrained optimal risks $R^*_{\mathcal{H}}(E)$, which isolate differences due to representation and execution under fixed inference constraints.

% \subsection{Section outline}
% \label{roadmap}
% Section~\ref{sec:empirical} instantiates this framework with concrete models, prompts, datasets, and statistical tests. Section~\ref{sec:performance} provides empirical evidence motivating assumptions about the relationship between natural-language and code traces. Section~\ref{sec:theory} analyzes the three arms theoretically within this framework.

\begin{figure*}[t]
    \centering
    \vspace{-5pt}
    \includegraphics[width=\textwidth]{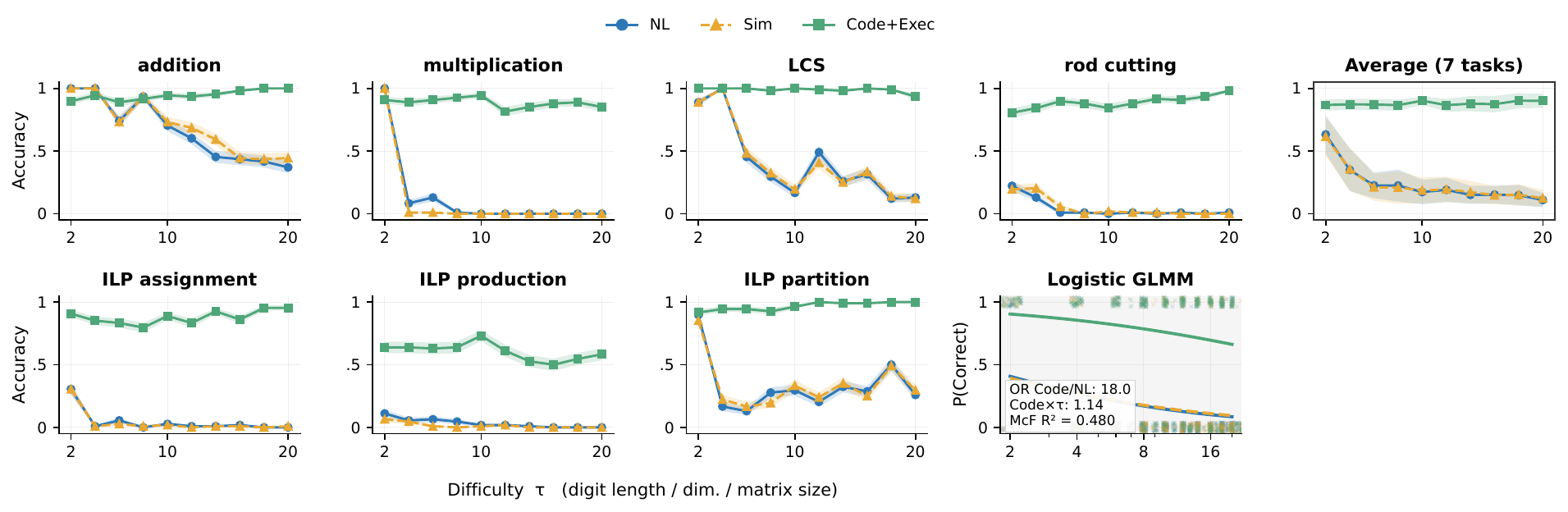}
    \vspace{-28pt}
    \caption{ \textbf{Code + Solver Execution scales better than natural language reasoning when problems get harder, both within-task and on average, interpolating and extrapolating (gray).} $\tau$ is used to control digit length for arithmetic, table dimensionality for dynamic programming, and constraint matrix dimensionality for integer linear programming. The figure uses the same data and setup as \cref{fig:main}. }
    \label{fig:per_task_accuracy}
\end{figure*}

\section{Evaluating the Three-Route Framework}
\label{sec:empirical}
Our evaluation aims to answer the research questions (RQ):
\begin{enumerate}
    \item[1)] Does $\text{Route~1} \approx \text{Route~2} < \text{Route~3}$ hold? (\cref{sec:instantiation,sec:performance})
\end{enumerate}
 % RQ 1 is central to the main thesis, and RQ 2 provides motivation to offload to an executor, as well as empirical foundations for the assumptions in  Section~\ref{sec:theory}.

\subsection{Experimental Instantiation of the Three Routes}
\label{sec:instantiation}

To draw conclusions about trace modality and execution method, we fix one stage at a time. For Route~1 vs.\ Route~2, we fix the execution phase by using the same LLM reasoning model forward pass, but use different modalities in the trace generator. For Route~2 vs.\ Route~3, we fix the trace generator that outputs code, then use different execution methods: either an LLM reasoning model forward pass or a Python~3 runtime. Below, let $X_i$ be the task instantiation of instance $i$. Structured JSON outputs are enforced for sampling in all routes
(\cref{fig:route_prompts}).

\paragraph{Sampling Route 1.}  Route 1 prompts an LLM $E_{NL} $ to function as a trace generator $Z_{NL} := E_{NL}(X_i)$. The trace is then fed into the same LLM $\rho_{NL} = E_{NL}$ to produce an answer $Y_{i}^{(NL)} := \rho_{NL} (Z_{NL})$. The prompt instructs the model to never use code, and to output a structured rationale and answer (see \cref{fig:route_prompts}). Operationalized, one experimental run for Route 1 is encapsulated in a single LLM reasoning model's forward pass without pause.
\paragraph{Sampling Route 2.} Similarly, Route 2 uses a LLM $E_{Code}$ as a trace generator for code modality $Z_{Code} := E_{Code}(X_i)$. The trace is then fed into the same LLM $\rho_{Sim} = E_{Code}$ to produce an answer $Y_{i}^{(Sim)} := \rho_{Sim} (Z_{Code})$. Operationalized, one experimental run for Route 2 is encapsulated in a single LLM reasoning model's forward pass without pause. Prompt templates are controlled to be as similar as possible, with
Route~2 differing only by the inclusion of a code-generation and
simulation instruction (\cref{fig:route_prompts}). We prompt the model to output a program snippet, structured reasoning over the code, followed by an answer.
\paragraph{Sampling Route 3.} Route 3 takes the exact same code trace $Z_{Code}$ as Route 2, except it runs it through a Python~3 runtime $\rho_{Exec}$ and retrieves the solution $Y_i^{(Exec)} = \rho_{Exec}(Z_{Code})$. The Python~3 runtime has access to five standard scientific libraries: \texttt{SciPy, NumPy, pandas, PuLP,} and \texttt{PyTorch}.
\paragraph{Observed outcomes.}
For each problem instance $i$, we observe paired Bernoulli outcomes \[\bigl(Y_i^{(\mathrm{NL})},\; Y_i^{(\mathrm{Sim})},\; Y_i^{(\mathrm{Exec})}\bigr).\]
% Let $X_i$ denote the original problem instance and $C_i$ the generated
% code for instance $i$. The tuple $(X_i, C_i)$ is held fixed across both
% arms. Arm~2 produces
% $(X_i, C_i) \;\mapsto\; Y_i^{(\mathrm{Sim})}$
% via language-model-based simulation, while Arm~3 executes the same code
% using an external Python runtime,
% $(X_i, C_i) \;\mapsto\; Y_i^{(\mathrm{Exec})}.$
% For notational symmetry, $X_i$ is included in both mappings, although it
% is ignored by the external executor.
\paragraph{Data and models.}
We evaluate a 40-task analysis set drawn from CLRS-30, NP-Hard-Eval,
and a custom fine-grained evaluation suite, across three random seeds
$\{0,1,2 \}$. The set contains 1,113 unique problem instances and
20,034 paired route evaluations after pooling six full-coverage models.
Tasks span arithmetic, dynamic programming, graph algorithms, string
algorithms, geometry, sorting, and NP-hard optimization, with difficulty
controlled by a parameter $\tau$ when applicable.
We evaluate closed-source models (\texttt{Claude Haiku~4.5, GPT-4o-mini,
Gemini~2.0 Flash, Gemini~2.5 Flash}) and open-source models
(\texttt{Mixtral-8x22b-Instruct, Codestral-2508}).
\Cref{app:complexity_stratified_route_accuracy,app:model_stratified_route_accuracy}
stratify these results by complexity and model, and
\cref{app:additional_model_evaluations} reports subset model coverage.

% Models with more than
% 50\% JSON parse failures are excluded to avoid confounding instruction
% following with reasoning ability.

% \paragraph{Prompting and execution.}
% In Arm~1, models are instructed to reason without using code and to
% output a structured rationale and answer.
% In Arm~2, the same prompt is augmented with instructions to generate
% code and simulate its execution.
% In Arm~3, the generated function is executed directly in a Python~3
% runtime with access to standard scientific libraries.

\subsection{End-to-End Performance Comparison}
\label{sec:performance}

We first define the paired tests used to compare the three routes, then report
the pooled route accuracies and paired differences, and finally summarize how
the gaps change with task difficulty.

\paragraph{Pairwise statistical tests.}
We evaluate pairwise route differences using McNemar tests on paired
Bernoulli outcomes.
For Route~1 vs.\ Route~2, the null hypothesis is
\begin{align*}
& H_0:\;
\Pr(Y^{(\mathrm{NL})}=1,\,Y^{(\mathrm{Sim})}=0) \\
& =
\Pr(Y^{(\mathrm{NL})}=0,\,Y^{(\mathrm{Sim})}=1),
\end{align*}
with an analogous null for Route~2 vs.\ Route~3.

Effect sizes are reported as paired accuracy differences, e.g.,
\[
\Delta_{\mathrm{Sim-NL}}
=
\mathrm{Acc}(\mathrm{Sim}) - \mathrm{Acc}(\mathrm{NL}),
\]
with 95\% confidence intervals estimated via cluster bootstrap
resampling over instances.
Holm--Bonferroni correction is applied to control family-wise error rate of the above 2 marginal pairwise tests on fully pooled data
at $\alpha=5\%$. %what family wise? Across difficulties? Add tau analysis later.

\paragraph{Difficulty scaling.}
To analyze how performance varies with task difficulty, we fit a generalized linear mixed-effects model (GLMM)
for binary accuracy $Y_i \in \{0,1\}$ with a logistic link:
\[
Y_i \mid u_{\text{inst}[i]}, u_{\text{seed}[i]} \sim \mathrm{Bernoulli}(p_i),
\]
\[
\mathrm{logit}(p_i)
=
\alpha
+
\beta_{\mathrm{route}_i}
+
\gamma\,\tau_i
+
\delta_{\mathrm{route}_i}\,\tau_i
+
u_{\text{inst}[i]}
+
u_{\text{seed}[i]},
\]
where $u_{\text{inst}[i]} \sim \mathcal{N}(0,\sigma^2_{\text{inst}})$ and
$u_{\text{seed}[i]} \sim \mathcal{N}(0,\sigma^2_{\text{seed}})$ are random intercepts.
Route and difficulty are modeled as fixed effects (including a route $\times$ difficulty interaction),
with instance and seed modeled as random effects.

% \begin{figure*}[t]
%     \centering
%     \vspace{-5pt}
%     \reviewfigurepair{images/CleanShot 2026-01-27 at 11.05.15@2x.png}{images/review_20260528/fig5_translator_judge.pdf}
%     \vspace{-3pt}
%     \caption{\textbf{Translation and discrimination prompts.}
%     \textbf{(a) Translator}: Converts code to NL reasoning step-by-step, mimicking native reasoning.
%     \textbf{(b) Discriminator}: Judge models classify traces as ``Native NL'' or ``Translated''.
%     Used in Section~\ref{sec:rep_analysis}.}
%     \label{fig:translator_discriminator}
% \end{figure*}

\paragraph{Results.}
Across the 40-task analysis set (\cref{fig:main}), Route~1 reaches
17.21\% accuracy, Route~2 reaches 17.37\%, and Route~3 reaches 48.84\%.
\emph{Route~3 has a statistically significant advantage over Route~2},
with a +31.47pp paired accuracy gap and a 95\% cluster-bootstrap interval
of $[+29.20,+33.71]$pp. For Route~1 vs.\ Route~2, the paired gap is
+0.15pp with a 95\% cluster-bootstrap interval of $[-0.30,+0.61]$pp
and a two-sided McNemar $p=0.39$, so we do not conclude a meaningful
difference between natural-language reasoning and code simulation under
this evaluation. On the other hand, as performance gaps widen with increasing task difficulty
(\cref{fig:per_task_accuracy}), Route~3 remains substantially higher as
the language-based routes degrade. This motivates the hypothesis that execution is the bottleneck.

% \subsection{Representation-Level Analysis: Does Language Add Information?}
% \label{sec:rep_analysis}

% The theoretical results in Section~\ref{sec:theory} rely on the premise
% that, for the evaluated tasks, models, and prompts, natural-language
% reasoning traces do not contain decision-relevant information beyond
% what is already present in code representations.
% We empirically evaluate this premise by analyzing both distributional
% and functional similarity between native NL reasoning and NL obtained
% by translating code.

\section{Route~1 vs.\ Route~2 Analysis}
\label{sec:rep_analysis}
In \cref{sec:performance}, Route~1 and Route~2 achieved similar paired accuracies under our evaluated models and prompts. This section asks: \textbf{Can we provide evidence that input representation is not the bottleneck to end-task performance?} We first model the theory in the simple linear case (\cref{sec:rep-noninferiority,sec:covariance-ordering}), then test the corresponding language-interface prediction (\cref{sec:func_similarity}).

The main result is that when we model code distribution as a canonicalization of natural language distribution, we can show for our simple linear hypothesis class that the uniform worst-case risk over \emph{all} hypotheses is exactly 0. In other words, under these conditions, ``code'' reasoning is non-inferior to ``natural language'' reasoning.

\subsection{Linear Modeling Setup and Intuition}
\label{sec:rep-noninferiority}

% We study code and native natural-language representations as two input distributions for the same downstream hypothesis class. The theory keeps the executor fixed and places the route difference in the representation distribution.

% We identify a sufficient criteria under which\[
% R^*_{\mathcal{H}_{\mathrm{C}}}(E_{\mathrm{Code}})
% \;\le\;
% R^*_{\mathcal{H}_{\mathrm{NL}}}(E_{\mathrm{NL}})
% +
% \varepsilon.
% \] holds.

Controlling the downstream hypothesis class enables us to compare the effect of the two input representations  \(P_C\) and \(P_N\), both of which live in the same feature space. For convenience, the downstream hypothesis class is modeled as realizable and linear.
\begin{equation}
\cH_{\rm lin}=\big\{h_{a,v}(b,u)=a^\top b+v^\top u:
a\in\R^{d_B},\ v\in\R^{d_U}\big\}.
\label{eq:linear-class-section3}
\end{equation}

Let $r\in\{C,N\}$, where $C$ denotes code and $N$ denotes native natural language. For each task, route $r$ produces a random input:
\begin{equation}
    S_r=(B,U_r)\sim P_r,
    \qquad S_r\in\R^{d_B+d_U}.
\end{equation}
The vector \(B\in\R^{d_B}\) is the answer-relevant core, or the
sufficient statistic \citep{lehmann1998theory}. The vector \(U_r\in\R^{d_U}\) contains route-specific
surface variation that should not change the answer once the core is fixed.
Prior research in NLP / LLMs has shown that models are sensitive to syntactic
heuristics, prompting, lexical choice, etc
\citep{gururangan2018annotation,mccoy2019right,geirhos2020shortcut,zhao2021calibrate,lu2021fantastically,pezeshkpour2023order}.

As a running example, consider a dynamic-programming problem. The core \(B\)
contains the recurrence, boundary conditions, and final query that determine the
answer. A code trace may still vary in decision-irrelevant details such as
variable names, helper-function names, or formatting; these are part of \(U_C\).
A natural-language trace can express the same recurrence, but it also admits
extra paraphrases such as ``fill the table from smaller subproblems,'' ``reuse
previous states,'' or ``work backward from the target.'' These additional
surface choices are the extra term \(\eta_N\). If both routes preserve \(B\),
then those prose choices should not create new algorithmic information.

We define the linear-model assumption here before proving the risk comparison.
Following the intuition presented above, we state the assumption as a shared-core generative model with extra natural-language nuisance. The first-order orthogonality conditions imply the centering and risk-separation facts defined in the lemma.

\begin{assumption}
\label{ass:extra-nuisance}
There exist variables \(B\), \(U_C\), \(\eta_N\), and \(\varepsilon\) such that
\begin{equation}
    S_C=(B,U_C),
    \qquad
    S_N=(B,U_N),
    \qquad
    U_N=U_C+\eta_N,
    \label{eq:extra-nuisance}
\end{equation}
and
\begin{equation}
    Y=\theta^\top B+\varepsilon.
    \label{eq:linear-label-section3}
\end{equation}
The code-side nuisance is centered conditional on the core, the extra
natural-language nuisance is mean-zero conditional on the core and code-side
nuisance, and the residual label noise is mean-zero conditional on all nuisance
coordinates:
\begin{equation}
\begin{aligned}
    \E[U_C\mid B]&=0,\\
    \E[\eta_N\mid B,U_C]&=0,\\
    \E[\varepsilon\mid B,U_C,\eta_N]&=0.
\end{aligned}
\label{eq:first-order-orthogonality}
\end{equation}
The residual nuisance \(\eta_N\) represents additional paraphrase, verbosity,
ordering, style, and prompt-form variation in native natural language. The
direction \(U_N=U_C+\eta_N\) formalizes the claim that natural language carries
the code-side nuisance plus additional surface variation. Code has a more constrained grammar and conventional structure, whereas
natural-language explanations admit more paraphrastic and discourse-level
variation. Work on the naturalness of software likewise treats code as a
structured statistical object with regularities distinct from ordinary natural language
\citep{allamanis2018survey}.
\end{assumption}

The next lemma is necessary and load-bearing for the main theorem. The lemma states that, once the core is fixed, both routes' surface coordinates remain centered nuisance and do not secretly carry residual label information.

\begin{lemma}
\label{lem:shared-core-consequences}
Under \cref{ass:extra-nuisance}, for \(r\in\{C,N\}\),
\begin{equation}
    \E[U_r\mid B]=0,
    \label{eq:centered-section3}
\end{equation}
\begin{equation}
    \E[\varepsilon\mid B,U_r]=0,
    \label{eq:label-noise-section3}
\end{equation}
and
\begin{equation}
    \E[BU_r^\top]=0,
    \qquad
    \E[\varepsilon U_r]=0.
    \label{eq:risk-separation-orthogonality}
\end{equation}
Moreover, the first-order nuisance condition implies the cross-moment identity:
\begin{equation}
    \E[U_C\eta_N^\top\mid B]=0.
    \label{eq:extra-orthogonal}
\end{equation}
\end{lemma}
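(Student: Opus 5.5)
The whole argument is the tower property applied to the three conditional-mean conditions in \cref{ass:extra-nuisance}, together with the fact that $B$ and $U_C$ are measurable with respect to the conditioning $\sigma$-algebras. Throughout we assume the relevant second moments are finite, so all cross-moments are well defined and conditional expectations may be pulled through products. The plan is to treat $r=C$ first, since it is essentially restated by the assumption. We then transfer each claim to $r=N$ using $U_N=U_C+\eta_N$.

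\textbf{Centering \eqref{eq:centered-section3}.}
\begin{itemize}
\item For $r=C$ this is the first condition of \eqref{eq:first-order-orthogonality}.
\item For $r=N$, write $\E[U_N\mid B]=\E[U_C\mid B]+\E[\eta_N\mid B]$.
\item Condition first on $(B,U_C)$: $\E[\eta_N\mid B]=\E\bigl[\E[\eta_N\mid B,U_C]\mid B\bigr]=0$.
\end{itemize}

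\textbf{Label noise \eqref{eq:label-noise-section3}.}
\begin{itemize}
\item For $r=C$, condition on $(B,U_C,\eta_N)$: $\E[\varepsilon\mid B,U_C]=\E\bigl[\E[\varepsilon\mid B,U_C,\eta_N]\mid B,U_C\bigr]=0$.
\item For $r=N$, note that $\sigma(B,U_N)\subseteq\sigma(B,U_C,\eta_N)$, because $U_N$ is a measurable function of $(U_C,\eta_N)$.
\item Hence the same tower step gives $\E[\varepsilon\mid B,U_N]=0$.
\end{itemize}

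\textbf{Unconditional orthogonality \eqref{eq:risk-separation-orthogonality}.}
\begin{itemize}
\item $\E[BU_r^\top]=\E\bigl[B\,\E[U_r\mid B]^\top\bigr]=0$, by the centering just proved.
\item $\E[\varepsilon U_r]=\E\bigl[U_r\,\E[\varepsilon\mid B,U_r]\bigr]=0$, since $U_r$ is $\sigma(B,U_r)$-measurable.
\end{itemize}

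\textbf{Cross-moment identity \eqref{eq:extra-orthogonal}.} We have
\[
\E[U_C\eta_N^\top\mid B]=\E\bigl[U_C\,\E[\eta_N\mid B,U_C]^\top\mid B\bigr]=0,
\]
pulling out $U_C$ because it is $\sigma(B,U_C)$-measurable.

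The step needing the most care is this last identity and the $r=N$ label-noise claim. Both depend on choosing the right intermediate $\sigma$-algebra: $\sigma(B,U_C)$ for the former, and $\sigma(B,U_C,\eta_N)\supseteq\sigma(B,U_N)$ for the latter. Once those inclusions are noted, every step is a one-line application of the tower property, so no computational obstacle remains.
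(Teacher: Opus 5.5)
Your proposal is correct and follows the paper's proof essentially step for step: you use the same tower-property arguments, conditioning on $\sigma(B,U_C)$ for the centering and cross-moment claims and on $\sigma(B,U_C,\eta_N)\supseteq\sigma(B,U_N)$ for the label-noise claim, and then the same conditioning to obtain the unconditional orthogonality identities. Your explicit finite-second-moment assumption is a sensible technical addition that the paper leaves implicit; it is what guarantees that $U_C\eta_N^\top$, $BU_r^\top$, and $\varepsilon U_r$ are integrable.
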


\begin{proof}
For code, \(\E[U_C\mid B]=0\) is part of \cref{ass:extra-nuisance}. For natural
language,
\[
\begin{aligned}
    \E[U_N\mid B]
    &= \E[U_C\mid B]+\E[\eta_N\mid B]\\
    &= 0+\E[\E[\eta_N\mid B,U_C]\mid B]\\
    &= 0.
\end{aligned}
\]
The label-noise condition follows by the tower property because \(U_C\) and
\(U_N=U_C+\eta_N\) are both functions of \((B,U_C,\eta_N)\). Thus
\(\E[\varepsilon\mid B,U_r]=0\) for \(r\in\{C,N\}\). The two risk-separation
equalities follow from
\[
    \E[BU_r^\top]=\E[B\,\E[U_r^\top\mid B]]
\]
and
\[
    \E[\varepsilon U_r]
    =
    \E[U_r\,\E[\varepsilon\mid B,U_r]]
    =
    0.
\]
Finally,
\[
    \E[U_C\eta_N^\top\mid B]
    =
    \E[U_C\,\E[\eta_N^\top\mid B,U_C]\mid B]
    =
    0.
\]
\end{proof}

Intuitively, the lemma says that the extra natural-language words are not
allowed to be a hidden answer channel once the algorithmic core is known.

\subsection{Results}
We now make the representation claim precise in the linear model. The argument
has three steps: define a nuisance covariance ordering, show that the
extra-native-language-nuisance assumption implies that ordering, and then use
the ordering to compare the risk of every member of the linear hypothesis class
under the two route distributions. We end with intuition that leads to the experimental results.

\label{sec:covariance-ordering}

Let
\begin{equation}
    \Sigma_{U,r}=\E[U_rU_r^\top]
    \label{eq:nuisance-covariance}
\end{equation}
be the route-$r$ nuisance covariance. We use the Loewner order on positive
semidefinite matrices: $A\preceq B$ means $B-A$ is positive semidefinite,
equivalently $v^\top A v\le v^\top Bv$ for every vector $v$.

The following proposition formalizes the following intuition. If native natural language is the code-side nuisance plus extra paraphrase, then the
natural-language nuisance covariance cloud should be at least as spread out as the code nuisance cloud in every linear direction. In other words, conditional on the same core, the additional natural-language degrees of freedom do not supply extra answer signal.

\begin{proposition}
\label{prop:extra-implies-cov}
Under \cref{ass:extra-nuisance},
\begin{equation}
    \Sigma_{U,C}\preceq \Sigma_{U,N}.
    \label{eq:cov-order}
\end{equation}
\end{proposition}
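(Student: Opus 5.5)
The plan is to expand the natural-language nuisance covariance directly using the decomposition $U_N=U_C+\eta_N$ from \cref{ass:extra-nuisance}. This gives
\[
\Sigma_{U,N}=\E[(U_C+\eta_N)(U_C+\eta_N)^\top]
=\Sigma_{U,C}+\E[U_C\eta_N^\top]+\E[\eta_N U_C^\top]+\E[\eta_N\eta_N^\top].
\]
The claim then reduces to two facts: the cross terms vanish, and the remaining extra term is positive semidefinite. Throughout, I assume the second moments of $U_C$ and $\eta_N$ are finite so that all the expectations are well defined. This integrability condition is implicit in the definition \eqref{eq:nuisance-covariance}.

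For the cross terms, I will invoke the cross-moment identity \eqref{eq:extra-orthogonal} of \cref{lem:shared-core-consequences}, namely $\E[U_C\eta_N^\top\mid B]=0$. Taking expectations over $B$ via the tower property gives $\E[U_C\eta_N^\top]=0$. Transposing gives $\E[\eta_N U_C^\top]=0$. If one prefers not to route through the lemma, the same conclusion follows directly from the second line of \eqref{eq:first-order-orthogonality}: $\E[U_C\eta_N^\top]=\E[U_C\,\E[\eta_N^\top\mid B,U_C]]=0$. This step is the only place the conditional mean-zero assumption on $\eta_N$ is used.

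With the cross terms gone, $\Sigma_{U,N}-\Sigma_{U,C}=\E[\eta_N\eta_N^\top]$. To show this is positive semidefinite, I will use the Loewner characterization stated just before the proposition. For any $v\in\R^{d_U}$,
\[
v^\top\E[\eta_N\eta_N^\top]v=\E[(v^\top\eta_N)^2]\ge 0 .
\]
Hence $v^\top\Sigma_{U,C}v\le v^\top\Sigma_{U,N}v$ for every $v$, which is exactly \eqref{eq:cov-order}.

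I do not expect a genuine obstacle; the argument is essentially a one-line variance decomposition. The only point needing care is the vanishing of the cross terms, which is where orthogonality enters and where an unconditional uncorrelatedness assumption would also have sufficed. I will also remark that the inequality is strict in a direction $v$ exactly when $v^\top\eta_N$ is not almost surely zero.
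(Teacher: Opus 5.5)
Your proposal is correct and follows the paper's proof essentially verbatim: expand $\Sigma_{U,N}$ using $U_N=U_C+\eta_N$, kill the cross terms with the cross-moment identity of \cref{lem:shared-core-consequences} plus the tower property, and conclude from $\Sigma_{U,N}-\Sigma_{U,C}=\E[\eta_N\eta_N^\top]\succeq 0$. Your explicit check $v^\top\E[\eta_N\eta_N^\top]v=\E[(v^\top\eta_N)^2]\ge 0$ and the finite-second-moment caveat just supply details the paper leaves implicit.
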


\begin{proof}
Using \eqref{eq:extra-nuisance},
\begin{align}
\Sigma_{U,N}
&=\E[(U_C+\eta_N)(U_C+\eta_N)^\top]\\
&=\E[U_CU_C^\top]+\E[\eta_N\eta_N^\top]
  +\E[U_C\eta_N^\top]+\E[\eta_N U_C^\top].
\end{align}
The cross terms vanish by \cref{lem:shared-core-consequences}:
\begin{equation}
\E[U_C\eta_N^\top]
=\E\big[\E[U_C\eta_N^\top\mid B]\big]=0,
\end{equation}
and similarly for its transpose. Hence
\begin{equation}
    \Sigma_{U,N}-\Sigma_{U,C}=\E[\eta_N\eta_N^\top]\succeq 0.
\end{equation}
\end{proof}

The covariance order from the above proposition implies the uniform population risk non-inferiority for code representations.

\begin{theorem}
\label{thm:cov-ordering}
Under \cref{ass:extra-nuisance}, for squared loss and the common class $\cH_{\rm lin}$ in \eqref{eq:linear-class-section3},
\begin{equation}
    \sup_{h\in\cH_{\rm lin}}\{R_C(h)-R_N(h)\}\le 0.
    \label{eq:uniform-ordering}
\end{equation}
Consequently code is representation-non-inferior to native natural language with margin 0 in this linear model.
\end{theorem}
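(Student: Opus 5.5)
The plan is to compute $R_r(h)$ explicitly for an arbitrary $h=h_{a,v}\in\cH_{\rm lin}$, with $R_r(h):=\E[(Y-a^\top B-v^\top U_r)^2]$ under $S_r=(B,U_r)\sim P_r$ and $Y=\theta^\top B+\varepsilon$ as in \cref{ass:extra-nuisance}. Then I will show that the difference $R_C(h)-R_N(h)$ depends only on the nuisance covariances, and finish with \cref{prop:extra-implies-cov}.

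\textbf{Step 1: decompose the risk.} Write the residual as
\[
Y-h(S_r)=(\theta-a)^\top B+\varepsilon-v^\top U_r .
\]
Expanding the square gives
\begin{align*}
R_r(h)={}&\E[((\theta-a)^\top B+\varepsilon)^2]+v^\top\Sigma_{U,r}v\\
&-2\,\E\big[((\theta-a)^\top B+\varepsilon)\,U_r^\top\big]v .
\end{align*}
The first term does not depend on $r$, because $B$ and $\varepsilon$ are shared across routes.

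\textbf{Step 2: kill the cross term.} The cross term splits as
\[
(\theta-a)^\top\E[BU_r^\top]v+\E[\varepsilon U_r^\top]v .
\]
Both pieces vanish by the risk-separation identities \eqref{eq:risk-separation-orthogonality} of \cref{lem:shared-core-consequences}. Therefore
\[
R_r(h)=\E[((\theta-a)^\top B+\varepsilon)^2]+v^\top\Sigma_{U,r}v,
\]
and
\[
R_C(h)-R_N(h)=v^\top(\Sigma_{U,C}-\Sigma_{U,N})v .
\]

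\textbf{Step 3: apply the covariance order.} By \cref{prop:extra-implies-cov}, $\Sigma_{U,C}\preceq\Sigma_{U,N}$, so the difference is at most $0$ for every $(a,v)$. Taking the supremum gives \eqref{eq:uniform-ordering}. The supremum actually equals $0$, attained at $v=0$.

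\textbf{Main obstacle: integrability.} The algebra is routine; the real issue is making sure all the moments used are finite. I would add the standing assumption that $B$, $U_C$, $\eta_N$, and $\varepsilon$ have finite second moments, so that every expectation is well defined and the tower-property manipulations in \cref{lem:shared-core-consequences} are legitimate. I would also state explicitly that the same joint law of $(B,\varepsilon)$ underlies both routes, which is what makes the first term route-independent.
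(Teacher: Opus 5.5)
Your proposal is correct and follows the paper's proof essentially step for step. Like the paper, you expand the squared residual, use the risk-separation identities of \cref{lem:shared-core-consequences} to get $R_r(h_{a,v})=R_{\rm core}(a)+v^\top\Sigma_{U,r}v$ with a route-independent core term, and then apply \cref{prop:extra-implies-cov}. Your extra remarks (finite second moments, a shared joint law of $(B,\varepsilon)$ across routes, and the supremum being exactly $0$ at $v=0$) are harmless refinements that the paper leaves implicit.
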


\begin{proof}
Fix $h_{a,v}\in\cH_{\rm lin}$. By \eqref{eq:linear-label-section3},
\begin{equation}
    h_{a,v}(B,U_r)-Y=(a-\theta)^\top B+v^\top U_r-\varepsilon .
\end{equation}
Using \cref{lem:shared-core-consequences}, the terms involving $U_r$ separate:
\begin{equation}
    R_r(h_{a,v})=R_{\rm core}(a)+v^\top\Sigma_{U,r}v,
\end{equation}
where $R_{\rm core}(a)=\E[((a-\theta)^\top B-\varepsilon)^2]$ does not depend on $r$. Therefore
\begin{equation}
    R_C(h_{a,v})-R_N(h_{a,v})
    =v^\top(\Sigma_{U,C}-\Sigma_{U,N})v\le 0
\end{equation}
by \eqref{eq:cov-order}. Taking the supremum over $h\in\cH_{\rm lin}$ proves \eqref{eq:uniform-ordering}.
\end{proof}

Since the core term is identical for code and native
natural language, the only possible representation-level difference is the
penalty for depending on \(U_r\). Because native language has no smaller
nuisance covariance, it cannot lower that penalty. Thus, in this model,
trace representation in Route~1 cannot be the bottleneck for improving performance, since the core algorithmic information is already present in the code representation \footnote{If this were not the case, we would not be able to get the final answer output from our pipeline.}.

\begin{remark}[Finite-sample OLS intuition]
By studying why nuisance coordinates hurt learning in the finite-sample regime, we can gain intuition about the uniform population risk result above\footnote{In the uniform case, we do not select a hypothesis in the hypothesis class like in the learning case, but rather, we show the phenomenon is true across \emph{all} hypotheses in the class.}. Let $p=d_B+d_U$ and consider the Gaussian special case
\begin{equation}
    S_r\sim\mathcal N(0,\Sigma_r),\qquad
    Y=\beta^{\star\top}S_r+\varepsilon,
    \qquad \beta^\star=(\theta,0),
\end{equation}
with $\varepsilon\sim\mathcal N(0,\sigma^2)$. Ordinary least squares and ridge regression have finite-sample excess-risk terms controlled by dimension, conditioning, and noise variance. In the isotropic case $\Sigma_r=I_p$, if $\widehat\beta_r$ is ordinary least squares from $n>p+1$ independent route-$r$ samples, then
\begin{equation}
    \E_{D_{r,n}}[R_r(\widehat\beta_r)]-R^\star
    =\sigma^2\frac{p}{n-p-1},
    \label{eq:ols-excess}
\end{equation}
where $R^\star=\sigma^2$. Fitting irrelevant coordinates increases finite-sample prediction error \citep{hastie2009elements,wainwright2019high}.
In the running example, this is the difference between learning from the
recurrence itself and learning from many alternative descriptions of the same
recurrence. Extra wording can be harmless in the infinite-data idealization. However, it is not a new source of answer-relevant information and can be a nuisance channel that a finite model must learn not to use.
\end{remark}

Overall, the linear theory provides an intuition for why representation is not the bottleneck. Once both routes preserve the same core, extra native-language nuisance can only increase population risk for this hypothesis class. \Cref{sec:func_similarity} demonstrates the nuisance intuition empirically when the theory breaks down.

\begin{figure}[t]
    \centering
    \vspace{-5pt}
    \includegraphics[width=\linewidth]{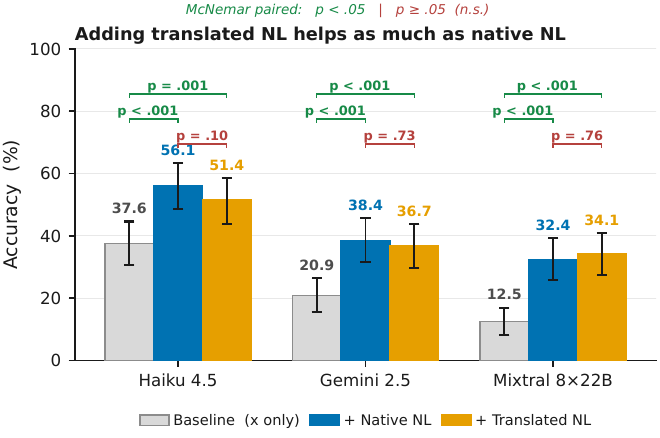}
    \vspace{-28pt}
    \caption{\textbf{Translated NL has similar downstream functionality to native NL as measured by end-task accuracy.} When we concatenate reasoning traces to the prompt, accuracy rises above the question-only baseline, indicating that the model uses the reasoning. Baseline-vs-concatenated differences are statistically significant, while Translated NL and Native NL are not significantly different. Source data is randomly sampled from the data used to generate \cref{fig:main}; we use 1000 samples per model. }
    \label{fig:translation_additivity}
\end{figure}

\subsection{Decision Intervention Reduction}
\label{sec:func_similarity}
When the theory breaks down due to non-linearity and
high-dimensionality, we show that the practical implications remain. If translating code into natural language via fixed transformation, similar to our nuisance intuition, preserves downstream accuracy relative to native
natural-language reasoning, then the representation non-inferiority of code remains the same even outside the linear proof.

\paragraph{Intervention setup.}
Our goal is to provide evidence that code is non-inferior to NL in expected loss in the LLM regime. In our proof, we modeled NL as a fixed transformation (linearly additive projection) of a code latent. Similarly, we apply a fixed transformation of the code (e.g., permutations and syntactic shifts) to simulate NL. We expect NL to yield nothing worse, since the underlying assumption is that code contains all decision-relevant information. The following experiment tests this implication directly.

We show that a fixed transformation of code (to natural language) behaves
functionally similar to original natural-language reasoning. Here, our estimand
is final accuracy, and the treatment is one of three inputs to the LLM executor:
translated natural-language reasoning, original natural-language reasoning, or
the baseline prompt without reasoning.

For a task instance $x$, we prompt a target language model in three conditions:
(1) Baseline:  x,
(2) Native: $ x \,\|\, z_{\mathrm{NL}},$
(3) Translated:  x $\,\|\, \hat z_{\mathrm{NL}},$ where $z_{\mathrm{NL}} \sim p_{\mathrm{NL}}(\cdot\mid x)$ is the native Route~1 trace and
$\hat z_{\mathrm{NL}}$ is obtained by translating the corresponding code to NL using the same translator procedure.
If translated NL loses decision-relevant information relative to native NL, we would expect
\[
\mathrm{Acc}(x \,\|\, \hat z_{\mathrm{NL}}) < \mathrm{Acc}(x \,\|\, z_{\mathrm{NL}}).
\]

\paragraph{Protocol and models.}
We run this test on 1000 held-out instances across multiple translator models (\texttt{Claude-Haiku-4.5, Gemini-2.5-Flash, Mixtral-8x22B-Instruct}).
Crucially, in this experiment the translator model is matched to the original code generator (i.e., we translate code produced by the same model family), to avoid confounding functional losses with cross-model stylistic mismatch.

\paragraph{Results.}
We fail to reject the null hypothesis of equal performance and observe overlapping 95\% confidence intervals between the Native and Translated conditions (\cref{fig:translation_additivity}).
This suggests that, under our evaluated settings, translating code into NL does not destroy decision-relevant information for downstream answering, consistent with the claim that NL CoT does not systematically add algorithmic advantage beyond what is in code. Qualitatively, the translated results are similar to the originals, though this appears to depend heavily on the translating prompt.

\paragraph{Interpretation.} Our empirical results indicate that translated code traces and native natural-language traces provide similar decision-relevant information in the settings we study. We find no evidence that natural-language reasoning introduces novel algorithmic strategies beyond those already captured by code representations on algorithmic tasks. Thus, in these experiments, replacing NL traces with code traces in the generation stage is not the end-to-end bottleneck. The execution analysis in \cref{sec:route2v3} examines the remaining gap.

\section{Route~2 vs.\ Route~3 Analysis}
\label{sec:route2v3}
The key research question in this section is: \textbf{Is execution the bottleneck for language-based reasoning?}
To address this question, we also ask two sub-questions:
\begin{enumerate}
    \item[1)] When code is \emph{correct}, does external code execution (Route 3) have lower expected loss than LLM-based code execution (Route 2)?
    \item[2)] When code is \emph{incorrect}, is the probability that Route 2 outperforms Route 3 small?
\end{enumerate}
We pinpoint execution as the bottleneck (cf.\ \cref{sec:rep_analysis}), highlighting how using LLMs to generate code plans and then delegating to an external solver is the best-performing of the three evaluated routes (\cref{sec:performance}). In the 40-task analysis set, Route~3 exceeds Route~2 by +31.47pp. The recovery mass where Route~2 succeeds while Route~3 fails is 1.61\%, whereas the execution-win mass where Route~3 succeeds while Route~2 fails is 33.08\%.

\subsection{Setup}
\label{sec:route2v3_setup}
As defined in \cref{sec:framework}, Route~2 and Route~3 share the same
trace generator $E_{\mathrm{Code}}$ and differ only in the executor.

Let $\rho_{\mathrm{Sim}}\in\mathcal{H}_{\mathrm{C}}$ denote the fixed
LLM-based simulation executor used in Route~2.
Let $g$ be a deterministic interpreter mapping $(x,z_{\mathrm{C}})$ to
$\mathcal{Y}\cup\{\bot\}$, where $\bot$ denotes execution failure. Define the instance-correct execution event
\[
C := \{ g(X,Z_{\mathrm{C}}) = Y^*(X) \}.
\]

\paragraph{Risk decomposition.}
Let
\begin{align*}
& e_C := \Pr(\hat Y_{\mathrm{Sim}}\neq Y^*(X)\mid C),
\qquad \\
& r := \Pr(\hat Y_{\mathrm{Sim}}=Y^*(X)\mid \neg C).
\end{align*}
Then
\begin{align*}
R(E_{\mathrm{Code}},\rho_{\mathrm{Exec}}) &= \Pr(\neg C),\\
R(E_{\mathrm{Code}},\rho_{\mathrm{Sim}}) &= \Pr(C)\,e_C + \Pr(\neg C)(1-r),
\end{align*}
and hence
\begin{align*}
&R(E_{\mathrm{Code}},\rho_{\mathrm{Sim}})
 -R(E_{\mathrm{Code}},\rho_{\mathrm{Exec}})\\
&\qquad =
\Pr(C)\,e_C
-
\Pr(\neg C)\,r .
\end{align*}

\paragraph{Implications.}
Simulation noise on correct-execution instances ($\Pr(C)e_C$) harms
Route~2, while recovery on incorrect or failed executions
($\Pr(\neg C)r$) helps Route~2.
Route~3 dominates whenever the recovery mass is insufficient to offset
simulation noise, a condition quantified empirically in
\cref{sec:performance}.

\paragraph{Empirical recovery as an upper bound.}
A direct way to operationalize the ``recovery'' term in the decomposition is to measure the frequency with which Route~2 succeeds while Route~3 fails on the \emph{same} generated code trace, i.e.,
\[
\Pr\!\left(
\hat Y_{\mathrm{Sim}} = Y^*(X),
\;
g(X,Z_{\mathrm{C}})\neq Y^*(X)
\right).
\]
This quantity corresponds to the \emph{recovery mass} $\Pr(\neg C)\,r$ appearing in the above risk gap.
% \[
% R(E_{\mathrm{Code}},\rho_{\mathrm{Sim}})
% -
% R(E_{\mathrm{Code}},\rho_{\mathrm{Exec}})
% =
% \Pr(C)\,e_C
% -
% \Pr(\neg C)\,r.
% \]
It aggregates both (i) genuine recovery from incorrect code and (ii) cases where execution fails (e.g., $g(X,Z_{\mathrm{C}})=\bot$) but the simulator still answers correctly.
Because the simulator may partially ignore $Z_{\mathrm{C}}$ and answer directly from $X$, this statistic should be interpreted as an \emph{upper bound} on mechanistic ``recovery from flawed code.''

\paragraph{Interpretation.}
Route~2 can outperform Route~3 only if the recovery mass $\Pr(\neg C)\,r$ is large enough to offset simulation noise on correct-execution instances $\Pr(C)\,e_C$.
Empirically, we find that recovery mass is consistently small across tasks and models (\cref{fig:recovery_final}), indicating that Route~2 rarely compensates for execution failures\footnote{\Cref{tab:appendix_code_failure_modes} reports the corresponding code-execution failure modes.} via recovery.
In the 40-task analysis set, this recovery mass is 1.61\%, and the execution-win mass $\Pr(C)e_C$ is 33.08\%. These values explain why deterministic execution typically achieves lower end-to-end error than simulation on the same generated code traces (\cref{fig:main}).

\begin{figure}[t]
    \centering
    \vspace{-5pt}
    \includegraphics[width=\linewidth]{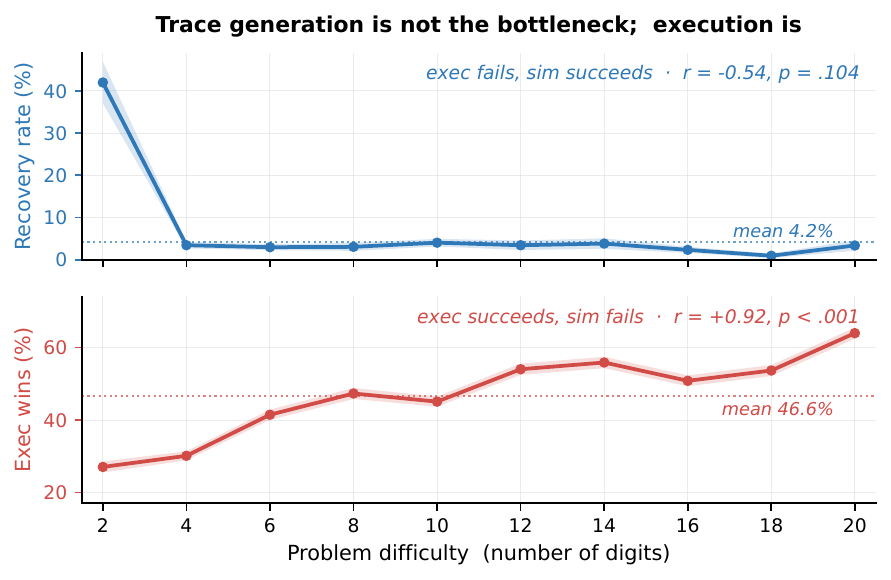}
    \vspace{-28pt}
    \caption{\textbf{Recovery mass (blue) is 1.61\% overall, while execution-win mass (red) is 33.08\% overall.} Recovery mass $\Pr(\neg C)\,r$ counts cases where code execution fails but LLM simulation succeeds. Execution-win mass $\Pr(C)e_C$ counts cases where code execution succeeds and LLM simulation fails.}
    \label{fig:recovery_final}
\end{figure}

% \subsection{Relating Theory and Empirical Observations}

\section{Related Work and Discussion}
\label{sec:related_work}

\textbf{Neuro-symbolic Learning.} This paper builds on research in neuro-symbolic integration \cite{graves_neural_2014, velickovic_neural_2021, reed_neural_2016, graves_hybrid_2016}, which combines neural networks with symbolic reasoning systems. These approaches are motivated by cognitive science \cite{schneider_controlled_2003, risko_cognitive_2016, anderson_neural_2010}, hierarchical reinforcement learning \cite{kolter_hierarchical_2007, dietterich_hierarchical_2000}, and compositionality research \cite{hudson_compositional_2018, hupkes_compositionality_2020, andreas_neural_2017, poggio2017and}. An orthogonal line of work explores direct execution of algorithms by neural networks \cite{velickovic_neural_2021, mahdavi_towards_2023, ibarz_generalist_2022, yan_neural_2020}. Unlike these approaches that focus on \emph{how} to integrate neural and symbolic components, our work addresses \emph{why} neuro-symbolic integration outperforms neural reasoning alone for algorithmic tasks (\cref{sec:rep_analysis,sec:route2v3}).

\textbf{LLM Reasoning.} Recent work has explored various reasoning paradigms for LLMs, including symbolic reasoning \cite{marra_integrating_2019, olausson_linc_2023, han_folio_2024}, chain-of-thought prompting \cite{wei2022chain, zelikman_star_2022, merrill_expressive_2024, altabaa_cot_2025}, and in-context learning \cite{xie2021explanation, garg2022can, akyurek2022learning, zhang2024trained}. \citet{xie2021explanation} model in-context learning as implicit Bayesian inference, which we extend to compare different reasoning representations. While prior work demonstrates \emph{that} certain prompting strategies improve performance, we provide a theoretical framework (\cref{sec:framework}) explaining \emph{why} code representations are never worse than natural language (\cref{sec:rep_analysis}) in certain settings.

\textbf{LLM Tool-Use.} Tool-augmented LLMs have achieved strong empirical results \cite{shen_llm_2024, 10.5555/3666122.3669119, qin_toolllm_2023, tang_toolalpaca_2023, parisi_talm_2022}. Code generation for tool-use can be viewed as a form of semantic parsing \cite{shin_few-shot_2022, krishnamurthy_neural_2017, berant_semantic_2013, dong_language_2016} or function calling \cite{puri_codenet_2021, alon_code2vec_2019, chen_neural_2018}. Our work complements this literature by providing theoretical justification (\cref{sec:rep_analysis,sec:route2v3}) for the observed empirical advantages of code-based tool-use over direct natural language reasoning. 

% \section{Discussion}

% \textbf{Summary.} This paper addresses whether algorithmic problems encoded in natural language should be solved via direct reasoning or by translating to code and executing with a solver. Our three-arm framework demonstrates that code execution consistently outperforms both code simulation and natural language reasoning, with theoretical backing from information theory.

% \textbf{Interpretation.} The theoretical analysis reveals that code representations yield higher mutual information with target algorithms than natural language representations. This explains the empirical observation: code serves as a more discriminative intermediate representation for implicit algorithm classification during LLM inference. The Bayesian framework makes this comparison tractable by decomposing the problem into translation and execution phases.

\section{Conclusion}
\label{sec:conclusion}
We introduced a three-route framework (\cref{sec:framework}) for disentangling representation and execution in algorithmic reasoning. The intermediate route, code generation followed by LLM simulation, lets us compare natural-language reasoning and code execution without changing both the trace representation and the executor at once. On the 40 algorithmic benchmark tasks, Route~1 and Route~2 are statistically close: Route~2 exceeds Route~1 by +0.15pp with a 95\% cluster-bootstrap CI of $[-0.30,+0.61]$pp. Route~3 reaches 48.84\% accuracy and exceeds Route~2 by +31.47pp with CI $[+29.20,+33.71]$pp (\cref{sec:performance}). The representation analysis in \cref{sec:rep_analysis} explains the small Route~1/Route~2 gap through a shared-core nuisance model and a reconstruction intervention showing that code-derived natural-language traces retain comparable downstream utility. The execution analysis in \cref{sec:route2v3} explains the large Route~3 gain through low recovery mass (1.61\%) and high execution-win mass (33.08\%, \cref{fig:recovery_final}). Taken together, these results suggest that code helps primarily by generating an executable trace that can be run reliably. In this setting, the central advantage of tool use is not the code representation, but the ability to hand the generated trace to a deterministic executor.

% Our causal intervention experiments (\cref{sec:rep_analysis}) demonstrate that NL reasoning traces are distributionally and functionally equivalent to code-translated traces, supporting the hypothesis that NL reasoning implicitly simulates underlying algorithmic computations for specific models, tasks and prompts. Theoretically, we prove that code-based reasoning achieves lower Bayes risk via Blackwell dominance (\cref{prop:dominance}), providing an information-theoretic explanation for the empirical advantages of solver-based pipelines.

\paragraph{Limitations.}
We highlight a few limitations. First, the main experiments do not fully cover frontier reasoning models. We ran controlled-subset frontier-model ablations (\Cref{tab:appendix_frontier_nopatch}) and found that the results generally hold. Second, task coverage is limited to algorithmic problems with externally verifiable answers and reliable Python execution, so open-ended, ambiguous, unsafe, or weakly specified tool use may behave differently. Third, the theory mainly builds intuition through standard decision-theoretic ideas and derives its novelty from the application, rather than providing a new general theorem about LLMs or code. Fourth, the linear models and shared-core/sufficient-statistic assumptions are motivated by practice but may fail when code omits decision-relevant information, natural language carries useful signal, or executors exploit structure outside the modeled core.

% \paragraph{Limitations}

% \paragraph{Future Work}

\section*{Impact Statement}
This paper presents work whose goal is to advance the field of machine learning. There are many potential societal consequences of our work, none of which we feel must be specifically highlighted here.

\section*{Acknowledgements}
This work was partially funded by ONR Contract N00014-23-1-2417. We are grateful for resources and computational support provided by the Cognitive Computation Group at the University of Pennsylvania. We also thank the reviewers for their thoughtful feedback and comments.

\bibliography{example_paper}
\bibliographystyle{icml2025}

%%%%%%%%%%%%%%%%%%%%%%%%%%%%%%%%%%%%%%%%%%%%%%%%%%%%%%%%%%%%%%%%%%%%%%%%%%%%%%%
%%%%%%%%%%%%%%%%%%%%%%%%%%%%%%%%%%%%%%%%%%%%%%%%%%%%%%%%%%%%%%%%%%%%%%%%%%%%%%%
% APPENDIX
%%%%%%%%%%%%%%%%%%%%%%%%%%%%%%%%%%%%%%%%%%%%%%%%%%%%%%%%%%%%%%%%%%%%%%%%%%%%%%%
%%%%%%%%%%%%%%%%%%%%%%%%%%%%%%%%%%%%%%%%%%%%%%%%%%%%%%%%%%%%%%%%%%%%%%%%%%%%%%%
\newpage
\appendix
% Polished LaTeX appendix source included by example_paper.tex.
% Required packages in the main preamble: booktabs, array, caption, url, hyperref, cleveref.

\clearpage
\onecolumn

\captionsetup[table]{font=small,labelfont=bf,textfont=normalfont,labelsep=colon,skip=5pt,hypcap=false}

% Appendix-local notation and formatting helpers.
\newcommand{\RouteOne}{Route~1}
\newcommand{\RouteTwo}{Route~2}
\newcommand{\RouteThree}{Route~3}
\newcommand{\appendixresult}[1]{\par\noindent\textbf{Result.}~#1\par}
\newenvironment{appendixtableblock}{\par\smallskip\noindent\begin{minipage}{\linewidth}\centering}{\end{minipage}\par\smallskip}

\section{Supplementary Route Results}
\label{app:route_robustness}

\subsection{Complexity-Stratified Route Accuracy}
\label{app:complexity_stratified_route_accuracy}

We stratify the main route-accuracy evaluation by the asymptotic complexity
class of each task to check whether a single complexity regime drives the
aggregate route pattern.

\begin{appendixtableblock}
\footnotesize
\setlength{\tabcolsep}{4pt}
\begin{tabular*}{\linewidth}{@{\extracolsep{\fill}}lrrrrrrr@{}}
\toprule
Complexity & Tasks & Inst. & R1 & R2 & R3 & R2--R1 & R3--R2 \\
\midrule
$O(1)$ & 3 & 180 & 46.2 & 47.3 & 86.2 & 1.0 & 39.0 \\
$O(\log n)$ & 1 & 35 & 22.4 & 28.6 & 32.9 & 6.2 & 4.3 \\
$O(n)$ & 4 & 135 & 4.8 & 5.0 & 7.5 & 0.2 & 2.5 \\
$O(n \log n)$ & 5 & 68 & 0.0 & 0.0 & 0.0 & 0.0 & 0.0 \\
$O(n^2)$ & 16 & 297 & 10.3 & 10.5 & 38.9 & 0.1 & 28.5 \\
$O(n^2 \log n)$ & 1 & 1 & 0.0 & 0.0 & 0.0 & 0.0 & 0.0 \\
$O(n^3)$ & 4 & 37 & 0.0 & 0.0 & 0.0 & 0.0 & 0.0 \\
NP-hard & 6 & 360 & 17.6 & 16.8 & 69.7 & -0.8 & 53.0 \\
\bottomrule
\end{tabular*}
\captionof{table}{Route accuracy grouped by asymptotic complexity on the 40-task benchmark set in the main route evaluation. We report task and instance counts. The route differences are percentage-point gaps.}
\label{tab:appendix_complexity_routes}
\end{appendixtableblock}

\appendixresult{\Cref{tab:appendix_complexity_routes} shows the aggregate route pattern within the larger retained complexity strata. \RouteThree{}--\RouteTwo{} is the largest observed route change, and \RouteTwo{} stays close to \RouteOne{}. }

\noindent\textbf{Task mapping.}~The overall task set's largest strata are $O(n^2)$ dynamic-programming, sorting, string, graph, and
shortest-path tasks, plus NP-hard ILP, edge-disjoint paths, shortest-path, and
TSP tasks. Smaller strata cover constant-time arithmetic, logarithmic binary
search, linear selection/string matching,
$O(n \log n)$ scheduling/sorting/hulls, $O(n^2 \log n)$ Kruskal MST, and cubic
dynamic-programming/shortest-path routines.

\subsection{Model-Stratified Route Accuracy}
\label{app:model_stratified_route_accuracy}

We stratify the main route comparison by model to check whether the aggregate
result is driven by one model family or scale regime. The \RouteThree{} advantage
persists across all six full-coverage models, while the \RouteTwo{}--\RouteOne{}
difference is small and changes sign across models.

\begin{appendixtableblock}
\footnotesize
\setlength{\tabcolsep}{2pt}
\begin{tabular*}{\linewidth}{@{\extracolsep{\fill}}>{\raggedright\arraybackslash}p{0.36\linewidth}lrrrrrrrr@{}}
\toprule
Model & Type & Inst. & R1 & R2 & R3 & R2--R1 & $p_{2,1}$ & R3--R2 & $p_{3,2}$ \\
\midrule
\path{anthropic/claude-haiku-4.5} & closed & 1113 & 23.3 & 23.6 & 47.1 & 0.3 & 0.612 & 23.5 & $2.4{\times}10^{-148}$ \\
\path{google/gemini-2.0-flash-001} & closed & 1113 & 17.5 & 18.1 & 51.8 & 0.7 & 0.137 & 33.7 & $8.5{\times}10^{-298}$ \\
\path{google/gemini-2.5-flash} & closed & 1113 & 20.7 & 20.5 & 50.2 & -0.1 & 0.823 & 29.7 & $3.1{\times}10^{-238}$ \\
\path{openai/gpt-4o-mini} & closed & 1113 & 14.1 & 12.9 & 49.1 & -1.2 & 0.00663 & 36.1 & $<10^{-300}$ \\
\path{mistralai/codestral-2508} & open & 1113 & 15.0 & 15.8 & 52.7 & 0.8 & 0.0279 & 36.9 & $<10^{-300}$ \\
\path{mistralai/mixtral-8x22b-instruct} & open & 1113 & 12.7 & 13.2 & 42.1 & 0.5 & 0.127 & 28.9 & $7.2{\times}10^{-253}$ \\
\bottomrule
\end{tabular*}
\captionof{table}{Route accuracy grouped by model on the 40-task analysis set. Each row contains 1,113 unique problems and 3,339 paired route evaluations; $p_{2,1}$ and $p_{3,2}$ are exact two-sided McNemar tests for \RouteTwo{} vs. \RouteOne{} and \RouteThree{} vs. \RouteTwo{}, respectively.}
\label{tab:appendix_model_routes}
\end{appendixtableblock}

\appendixresult{\Cref{tab:appendix_model_routes} shows that \RouteThree{} stays above \RouteTwo{} for every full-coverage model row, so the aggregate execution gain is not driven by a single weak or strong model.}

\section{Functional Similarity Checks}
\label{app:functional_similarity_checks}

\subsection{Prompt Translation Shot Ablation}
\label{app:prompt_translation_shot_ablation}

The functional similarity test translates code traces back into natural
language as a fixed transformation. To test sensitivity to prompting, we vary the number of
in-context examples used by the translator and measure translated-trace utility
against native natural-language traces.

\begin{appendixtableblock}
\small
\setlength{\tabcolsep}{4pt}
\begin{tabular*}{\linewidth}{@{\extracolsep{\fill}}lrrrrrr@{}}
\toprule
Shots & $x$ & $x$ + native NL & $x$ + translated NL & $\Delta$ native & $\Delta$ translated & Gap \\
\midrule
0 & 32.70\% & 55.70\% & 42.41\% & +23.00pp & +9.70pp & -13.29pp \\
1 & 32.70\% & 55.70\% & 45.78\% & +23.00pp & +13.08pp & -9.92pp \\
2 & 32.70\% & 55.70\% & 49.58\% & +23.00pp & +16.88pp & -6.12pp \\
3 & 32.70\% & 55.70\% & 48.52\% & +23.00pp & +15.82pp & -7.17pp \\
4 & 32.70\% & 55.70\% & 49.37\% & +23.00pp & +16.67pp & -6.33pp \\
5 & 32.70\% & 55.70\% & 50.00\% & +23.00pp & +17.30pp & -5.70pp \\
10 (reference) & 39.00\% & 56.50\% & 52.00\% & +17.50pp & +13.00pp & -4.50pp \\
\bottomrule
\end{tabular*}
\captionof{table}{Translation-additivity shot ablation for Claude Haiku 4.5. Here $x$ denotes the question-only baseline, and Gap is translated NL minus native NL. Rows 0--5 use a matched 25\% subset sweep. The 10-shot row is an original main-evaluation reference and is not from the same 25\% subset sweep.}
\label{tab:appendix_shot_ablation}
\end{appendixtableblock}

\appendixresult{\Cref{tab:appendix_shot_ablation} shows that translated traces improve over the question-only baseline at every reported shot setting, but remain below native natural-language traces in every row.}

\section{Additional Model Evaluations}
\label{app:additional_model_evaluations}

\subsection{Recursive Language Model Evaluation}
\label{app:recursive_language_model_evaluation}

To broaden coverage beyond standard LLM forward passes, we evaluate recursive
language model (RLM) execution on a 25\% subset at seed~0. The RLM code and
natural-language arms remain close under this executor family.

\begin{appendixtableblock}
\small
\begin{tabular*}{\linewidth}{@{\extracolsep{\fill}}lrrr@{}}
\toprule
Model & RLM Code Acc & RLM NL Acc & Better Arm \\
\midrule
Claude Haiku 4.5 & 29.4\% & 30.9\% & NL \\
Gemini 2.5 Pro & 47.5\% & 45.2\% & Code \\
\bottomrule
\end{tabular*}
\captionof{table}{Recursive language model code-vs.-NL accuracy on the 25\% subset, seed~0. Better Arm marks the higher row accuracy.}
\label{tab:appendix_rlm}
\end{appendixtableblock}

\appendixresult{\Cref{tab:appendix_rlm} shows mixed code-vs.-NL results under RLM execution: the NL arm is higher for Claude Haiku 4.5, while the code arm is higher for Gemini 2.5 Pro. Because this is a 25\% subset, we use it as model-coverage evidence rather than as the main route estimate.}

\subsection{Coding-Specialized Model Evaluation}
\label{app:coding_specialized_model_evaluation}

We evaluate coding-specialized models to test whether models trained on code
change the \RouteTwo{}--\RouteOne{} pattern. Even for these models, replacing
natural-language reasoning with simulation over code traces changes little
relative to replacing simulation with actual code execution.

\begin{appendixtableblock}
\footnotesize
\setlength{\tabcolsep}{4pt}
\begin{tabular*}{\linewidth}{@{\extracolsep{\fill}}>{\raggedright\arraybackslash}p{0.39\linewidth}ccc@{}}
\toprule
Model & NL & Sim & Code Exec \\
\midrule
\path{x-ai/grok-code-fast-1} (25\% data) & 47.71\% (167/350) & 47.71\% (167/350) & 55.99\% (159/284) \\
\path{qwen/qwen3-coder} (25\% data) & 30.23\% (104/344) & 26.86\% (94/350) & 56.19\% (168/299) \\
\path{codestral-2508} (original) & 19.89\% (943/4740) & 23.14\% (1097/4740) & 59.65\% (2266/3799) \\
\bottomrule
\end{tabular*}
\captionof{table}{Route accuracy for coding-specialized models. NL, Sim, and Code Exec correspond to \RouteOne{}, \RouteTwo{}, and \RouteThree{}; each cell reports accuracy with correct/denominator counts, and denominators are parse-normalized per arm.}
\label{tab:appendix_coding_models}
\end{appendixtableblock}

\appendixresult{\Cref{tab:appendix_coding_models} shows that Code Exec is the highest-accuracy arm in each coding-specialized row. Grok and Qwen use 25\% subset runs, while Codestral uses the original evaluation run.}

\subsection{Frontier Model Controlled-Subset Evaluation}
\label{app:frontier_model_sanity_checks}

\begin{appendixtableblock}
\small
\setlength{\tabcolsep}{5pt}
\begin{tabular*}{\linewidth}{@{\extracolsep{\fill}}lrrr@{}}
\toprule
Model & Route 1 & Route 2 & Route 3 \\
\midrule
GPT-5.4 & 42.57\% (149/350) & 41.43\% (145/350) & 54.01\% (175/324) \\
Claude Opus 4.6 & 52.73\% (174/330) & 73.42\% (116/158) & 77.54\% (107/138) \\
\bottomrule
\end{tabular*}
\captionof{table}{Frontier controlled-subset route accuracy on the seed~1, 350-instance subset.}
\label{tab:appendix_frontier_nopatch}
\end{appendixtableblock}

\appendixresult{\Cref{tab:appendix_frontier_nopatch} keeps \RouteThree{} ahead in both controlled-subset rows, but the margins are smaller than in the main six-model evaluation.}

\clearpage
\section{Failure Analysis}
\label{app:failure_analysis}

\subsection{Code Execution Failure Modes}
\label{app:code_execution_failure_modes}

Among \RouteThree{} code-execution failures, most failures are semantic wrong
answers rather than syntax, runtime, or timeout failures. This supports the interpretation
that these errors often originate in the generated program specification rather
than in the Python runtime itself.

\begin{appendixtableblock}
\small
\begin{tabular*}{\linewidth}{@{\extracolsep{\fill}}lrrrr@{}}
\toprule
Model & Wrong answer & Syntax error & Runtime error & Time limit \\
\midrule
Haiku & 56.26\% & 10.07\% & 1.88\% & 31.80\% \\
Codestral & 81.80\% & 5.87\% & 9.00\% & 3.33\% \\
Gemini 2.0 & 77.22\% & 14.65\% & 1.86\% & 6.27\% \\
Gemini 2.5 & 77.63\% & 17.10\% & 1.82\% & 3.45\% \\
GPT-4o mini & 72.93\% & 4.56\% & 18.97\% & 3.54\% \\
Mixtral & 60.67\% & 4.95\% & 32.14\% & 2.24\% \\
Total & 70.43\% & 9.34\% & 11.55\% & 8.67\% \\
\bottomrule
\end{tabular*}
\captionof{table}{Failure-mode distribution conditional on \RouteThree{} code-execution failures. Rows are within-model percentages.}
\label{tab:appendix_code_failure_modes}
\end{appendixtableblock}

\appendixresult{\Cref{tab:appendix_code_failure_modes} shows that wrong answers dominate among \RouteThree{} code-execution failures. Parse-only rows, where execution completed but the returned value failed parsing or type checks, are excluded to keep the diagnostic focused on failures after execution was attempted.}

% \section{Proof of Corollary 1} \label{app:cor1_proof}

% \begin{proof}
% Let $L_r = \max_\theta \mb E_{\gamma,z} \log q(\gamma| z,\theta, r)$ denote the optimal expected log-likelihood for representation $r$. Note that $H_{CE}(r) = -L_r$.

% From \cref{lem:2}, the mutual information is lower bounded by:
% \begin{align}
%     \mc I(\gamma, Z_r) \geq H(\gamma) + L_r.
% \end{align}

% Taking the difference between code and NL:
% \begin{align}
%     &\mc I(\gamma, Z_{\mathrm{Code}}) - \mc I(\gamma, Z_{\mathrm{NL}}) \nonumber \\
%     &\quad \geq [H(\gamma) + L_C] - [H(\gamma) + L_{NL}] \nonumber \\
%     &\quad = L_C - L_{NL} = H_{CE}(\mathrm{NL}) - H_{CE}(\mathrm{Code}).
% \end{align}

% When $H_{CE}(\mathrm{Code}) < H_{CE}(\mathrm{NL})$, equivalently $L_C > L_{NL}$, the RHS is positive:
% \begin{equation}
%     \mc I(\gamma, Z_{\mathrm{Code}}) > \mc I(\gamma, Z_{\mathrm{NL}}).
% \end{equation}

% Applying \cref{thm:1}:
% \begin{align}
%     P^*_{\mathrm{NL}} - P^*_{\mathrm{Code}} \geq \frac{\mc I(\gamma, Z_{\mathrm{Code}}) - \mc I(\gamma, Z_{\mathrm{NL}})}{\log_2(K-1)} > 0.
% \end{align}

% Therefore, $P^*_{\mathrm{Code}} < P^*_{\mathrm{NL}}$.
% \end{proof}

\end{document}